\documentclass[11pt,journal,compsoc]{IEEEtran}
\usepackage[utf8]{inputenc}
\usepackage{amsmath, amssymb, bm}
\usepackage{graphicx}
\usepackage{booktabs}
\usepackage{algorithm}
\usepackage{algpseudocode}
\usepackage{cite}
\usepackage{hyperref}

\newenvironment{proof}{\begin{IEEEproof}}{\end{IEEEproof}}

\newtheorem{lemma}{Lemma}
\newtheorem{proposition}{Proposition}

\begin{document}

\title{Edge-Centric Adaptive Neural Activation via Parametric Tanh Functions}
\author{Andreas~Falkenberg%
\thanks{A. Falkenberg is with QPI Laboratories, Inc., USA (e-mail: andreas@qpilabs.com).}}

\IEEEtitleabstractindextext{%
\begin{abstract}
This paper introduces a compact, edge-centric neural architecture that resolves the computational bottlenecks of modern edge-activated paradigms like Kolmogorov-Arnold Networks (KANs) while retaining their expressiveness over traditional Multi-Layer Perceptrons (MLPs). While KANs demonstrate remarkable representational capacity by placing learnable univariate functions on connection edges rather than hidden nodes, their reliance on non-local B-splines introduces severe optimization hurdles, high memory overhead, and poor hardware scaling. To bridge this gap, we propose the Parametric Tanh Connection Function, a smooth, bounded architectural primitive defined as $a_{\text{tanh}}(x, \delta, \alpha, \beta, \gamma) = \delta + \alpha \tanh(\beta x + \gamma)$. This formulation unifies offset, amplitude, local slope, and horizontal displacement into a lightweight, 4-parameter differentiable mapping optimized edge-by-edge via backpropagation. Evaluated on dense architectures using MNIST and CIFAR-10, our parametric tanh connection model achieves significant absolute accuracy gains over traditional affine baselines—reaching 95.63\% $\pm$ 0.08\% on MNIST and 44.74\% $\pm$ 0.18\% on CIFAR-10. Crucially, it matches or exceeds the expressiveness of unconstrained 7-parameter exponential connection baselines while requiring up to 85\% less training runtime. By replacing complex splines or hyper-flexible exponents with a bounded, analytically smooth hyperbolic tangent kernel, this work delivers a highly scalable, hardware-friendly compromise for heterogeneous connection-level deep learning.
\end{abstract}

\begin{IEEEkeywords}
Kolmogorov-Arnold Networks (KAN), connection-level nonlinearities, edge-centric architectures, parametric tanh, trainable activation functions.
\end{IEEEkeywords}}

\maketitle
\IEEEdisplaynontitleabstractindextext
\IEEEpeerreviewmaketitle

\section{Introduction}
Conventional neural networks attach a scalar weight to each connection and apply a shared nonlinear activation after the weighted inputs have been combined. This separation is computationally convenient, but it limits the learnable behavior of an individual connection to affine scaling. A connection-level function instead permits each input branch to learn its own nonlinear response before aggregation. The resulting model can adapt not only the magnitude and offset of a connection, but also its local slope and saturation behavior through the same optimization process.

The primary contributions of this work are summarized as follows:
\begin{itemize}
    \item The proposed parametric tanh function effectively learns offset, amplitude, slope, and horizontal displacement on each unique connection edge.
    \item Connection-level placement permits heterogeneous nonlinear responses within the same layer rather than imposing a single post-aggregation activation function on all hidden nodes.
    \item The seven-parameter analytic family motivates the design space and serves as an ablation for assessing whether additional functional flexibility improves empirical performance.
    \item Benchmarks on MNIST and CIFAR-10 demonstrate that the four-parameter tanh model provides the highest test accuracy among the three evaluated configurations while training substantially faster than the seven-parameter variant.
\end{itemize}

The remainder of this paper is structured as follows. Section 2 establishes conventional neural network and optimization notation required for the connection-level formulation. Section 3 formalizes single-layer neural classification, direct least-squares, and gradient-based iterative methods. Section 4 introduces the seven-parameter analytic connection family, derives the compact parametric tanh model alongside its backpropagation derivatives, and presents the unified layer learning algorithms. Section 5 evaluates the proposed configurations empirically against an affine baseline using dense and convolutional topologies on MNIST and CIFAR-10. Section 6 provides concluding remarks and future research directions.

\section{Background and Problem Formulation}

\subsection{Prerequisite}
For a general classification task, the input is a real-valued feature vector and the classifier maps that vector to a discrete class label~\cite{rosenblatt1958perceptron}. Complex-valued observations can be represented by concatenating their real and imaginary components into an equivalent real-valued vector:
\begin{equation}
H: \mathbb{R}^n \to \{0, 1, \dots, k-1\},
\end{equation}
with $\bm{x} \in \mathbb{R}^n$ and $H(\bm{x}) \in \{0, 1, \dots, k-1\}$. The output result is an integer between $0$ and $k-1$. The function $H$ maps a real-valued input feature vector from a sensor or dataset to an output class represented by that integer label. We can construct neural networks for this mapping using column-vector notation.

Let $\bm{x} \in \mathbb{R}^n$ denote the layer input and $\bm{y} \in \mathbb{R}^k$ the layer output. The weight matrix is defined as $\bm{W} \in \mathbb{R}^{k \times n}$ and the affine mapping is given by $\bm{y} = \bm{W}\bm{x} + \bm{b}$, where $\bm{b} \in \mathbb{R}^k$. If row-vector notation is used instead, the transpose convention must be applied consistently throughout; this paper strictly adheres to column vectors:
\begin{equation}
[\bm{W} \quad \bm{b}] \begin{bmatrix} \bm{x}^{\top} \\ 1 \end{bmatrix} = \bm{W}\bm{x} + \bm{b}.
\end{equation}

In a conventional neural network, each neuron typically includes an additive bias. The same affine mapping can be represented in homogeneous coordinates by augmenting the input vector with a constant component equal to one. The bias therefore does not change the underlying mathematical class, although it remains important in implementation. Each matrix element represents a trainable scalar weight, and the final predicted class is obtained from the maximum output score. For a one-layer model, the classifier can be written as follows:
\begin{equation}
\bm{z} = \bm{W}\bm{x} + \bm{b}.
\end{equation}

For a one-layer classifier, the network first computes a vector of $k$ real-valued scores $\bm{z} \in \mathbb{R}^k$. The predicted class is then:
\begin{equation}
H(\bm{x}) = \arg\max_{1 \le j \le k} z_j.
\end{equation}
The $\arg\max$ operation is used exclusively for inference and is not treated as an activation function during gradient-based training; a differentiable loss is applied to the score vector $\bm{z}$ instead.

For an $m$-layer classifier, the network transitions are formalized layer by layer. Let $\ell \in \{1, \dots, m\}$ denote the layer index:
\begin{equation}
\bm{z}^{(\ell)} = \bm{W}^{(\ell)}\bm{h}^{(\ell-1)} + \bm{b}^{(\ell)}, \quad \bm{h}^{(\ell)} = \phi^{(\ell)}(\bm{z}^{(\ell)}).
\end{equation}
The trainable network produces a real-valued score vector at the final output layer, and the predicted class is selected by an $\arg\max$ operation applied only during inference. The $m-1$ hidden transitions may use nonlinear activation functions $\phi^{(\ell)}$, while the output layer may remain linear when the training objective is applied directly to the score vector. Matrix dimensions must satisfy standard compatibility conditions: the output dimension of each layer equals the input dimension of the next layer, the first layer accepts the $n$-dimensional input vector, and the final layer produces $k$ class scores:
\begin{equation}
H(\bm{x}) = \arg\max_{1 \le j \le k} z_j^{(m)}.
\end{equation}

If every hidden activation is the identity, consecutive affine transformations can be composed into a single affine map. Depth alone therefore does not increase representational capacity in a purely linear network. This observation motivates the use of nonlinear hidden transformations.

\begin{lemma}
A composition of affine layers with identity activations can be replaced by a single affine layer. In particular, if $\bm{h}^{(\ell)} = \bm{W}^{(\ell)}\bm{h}^{(\ell-1)} + \bm{b}^{(\ell)}$ and every activation is the identity, then the network is equivalent to $\bm{h}^{(m)} = \bm{W}_{\text{eq}}\bm{x} + \bm{b}_{\text{eq}}$ for suitable $\bm{W}_{\text{eq}}$ and $\bm{b}_{\text{eq}}$.
\end{lemma}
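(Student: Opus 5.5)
The plan is induction on the number of layers $m$, carrying explicit formulas for the equivalent parameters. I will show that for every $\ell \in \{1,\dots,m\}$ there exist a matrix $\bm{A}^{(\ell)}$ and a vector $\bm{c}^{(\ell)}$ with $\bm{h}^{(\ell)} = \bm{A}^{(\ell)}\bm{x} + \bm{c}^{(\ell)}$ for every input $\bm{x} = \bm{h}^{(0)}$. Taking $\ell = m$ then gives $\bm{W}_{\text{eq}} = \bm{A}^{(m)}$ and $\bm{b}_{\text{eq}} = \bm{c}^{(m)}$.

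For the base case $\ell = 1$, take $\bm{A}^{(1)} = \bm{W}^{(1)}$ and $\bm{c}^{(1)} = \bm{b}^{(1)}$. This holds because the identity activation gives $\bm{h}^{(1)} = \bm{z}^{(1)}$.

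For the inductive step, assume $\bm{h}^{(\ell-1)} = \bm{A}^{(\ell-1)}\bm{x} + \bm{c}^{(\ell-1)}$ and substitute it into the layer recursion. Distributivity of matrix multiplication gives
\begin{equation*}
\bm{h}^{(\ell)} = \bm{W}^{(\ell)}\bigl(\bm{A}^{(\ell-1)}\bm{x} + \bm{c}^{(\ell-1)}\bigr) + \bm{b}^{(\ell)} = \bigl(\bm{W}^{(\ell)}\bm{A}^{(\ell-1)}\bigr)\bm{x} + \bigl(\bm{W}^{(\ell)}\bm{c}^{(\ell-1)} + \bm{b}^{(\ell)}\bigr).
\end{equation*}
So I set $\bm{A}^{(\ell)} = \bm{W}^{(\ell)}\bm{A}^{(\ell-1)}$ and $\bm{c}^{(\ell)} = \bm{W}^{(\ell)}\bm{c}^{(\ell-1)} + \bm{b}^{(\ell)}$. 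Unrolling the recursion gives the closed forms
\begin{equation*}
\bm{W}_{\text{eq}} = \bm{W}^{(m)}\cdots\bm{W}^{(1)}, \qquad \bm{b}_{\text{eq}} = \sum_{\ell=1}^{m} \bm{W}^{(m)}\cdots\bm{W}^{(\ell+1)}\bm{b}^{(\ell)},
\end{equation*}
where the empty product for $\ell = m$ is the identity.

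The step most likely to go wrong is the dimension bookkeeping. By the paper's compatibility conditions, if layer $\ell$ has output dimension $d_\ell$ (with $d_0 = n$ and $d_m = k$), then $\bm{W}^{(\ell)} \in \mathbb{R}^{d_\ell \times d_{\ell-1}}$. Hence $\bm{A}^{(\ell)} \in \mathbb{R}^{d_\ell \times n}$ and $\bm{c}^{(\ell)} \in \mathbb{R}^{d_\ell}$, so every product above is well defined and $\bm{W}_{\text{eq}} \in \mathbb{R}^{k \times n}$.

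As an alternative cross-check, use the homogeneous coordinates introduced earlier. Each layer acts on $\begin{bmatrix}\bm{h}^{(\ell-1)} \\ 1\end{bmatrix}$ by the block matrix
\begin{equation*}
\bm{M}^{(\ell)} = \begin{bmatrix}\bm{W}^{(\ell)} & \bm{b}^{(\ell)} \\ \bm{0}^{\top} & 1\end{bmatrix}.
\end{equation*}
The whole network is then the single matrix $\bm{M}^{(m)}\cdots\bm{M}^{(1)}$. This product has the same block form, because the bottom row $[\bm{0}^{\top}\ 1]$ is preserved under multiplication. Reading off its top blocks recovers $\bm{W}_{\text{eq}}$ and $\bm{b}_{\text{eq}}$.
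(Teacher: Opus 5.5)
Your proof is correct and takes the same approach as the paper. The paper gives no separate proof of this lemma, only the preceding remark that consecutive affine transformations compose into a single affine map; your induction, with $\bm{W}_{\text{eq}} = \bm{W}^{(m)}\cdots\bm{W}^{(1)}$ and the summed bias, is the rigorous version of that remark, and your homogeneous-coordinate cross-check is also sound, since the appended row $[\bm{0}^{\top}\ 1]$ is exactly the ``additional constant output'' the paper says must be carried between layers.
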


Consequently, the analysis focuses on nonlinear activation functions. A network containing only affine layers cannot represent nonlinearly separable decision boundaries such as XOR, whereas a multilayer network with at least one nonlinear hidden transformation can do so~\cite{cybenko1989approximation}. The input dimension is determined by the feature vector and the output dimension by the number of classes. The following homogeneous-coordinate result permits affine layers to retain a clean matrix representation.

\begin{lemma}
An affine layer with $n$ inputs and $k$ outputs, $\bm{y} = \bm{W}\bm{x} + \bm{b}$, can be represented as a matrix multiplication in homogeneous coordinates by augmenting the input with a constant one. Specifically, define $\tilde{\bm{x}} = [\bm{x}^{\top}, 1]^{\top} \in \mathbb{R}^{n+1}$ and $\tilde{\bm{W}} = [\bm{W} \quad \bm{b}] \in \mathbb{R}^{k \times (n+1)}$, such that $\bm{y} = \tilde{\bm{W}}\tilde{\bm{x}}$.
\end{lemma}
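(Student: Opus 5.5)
The plan is a direct verification by block matrix multiplication; no earlier result is needed beyond the definition of the affine layer. With $\tilde{\bm{x}} = [\bm{x}^{\top}, 1]^{\top} \in \mathbb{R}^{n+1}$ and $\tilde{\bm{W}} = [\bm{W} \quad \bm{b}] \in \mathbb{R}^{k \times (n+1)}$, the dimensions are compatible: $\tilde{\bm{W}}$ has $n+1$ columns and $\tilde{\bm{x}}$ has $n+1$ entries, so the product lies in $\mathbb{R}^{k}$.

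The key step is to partition $\tilde{\bm{W}}$ into the $k \times n$ block $\bm{W}$ and the $k \times 1$ block $\bm{b}$. We partition $\tilde{\bm{x}}$ conformally into the $n \times 1$ block $\bm{x}$ and the $1 \times 1$ block $1$. The block product rule then gives
\begin{equation*}
\tilde{\bm{W}}\tilde{\bm{x}} = \bm{W}\bm{x} + \bm{b}\cdot 1 = \bm{W}\bm{x} + \bm{b} = \bm{y}.
\end{equation*}
As a check, I will also verify this componentwise. For each $i \in \{1,\dots,k\}$,
\begin{equation*}
(\tilde{\bm{W}}\tilde{\bm{x}})_i = \sum_{j=1}^{n} W_{ij}x_j + b_i \cdot 1 = y_i.
\end{equation*}

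I will also note that the representation is exact and unique for fixed $(\bm{W},\bm{b})$: the map $(\bm{W},\bm{b}) \mapsto \tilde{\bm{W}}$ is a bijection onto $\mathbb{R}^{k\times(n+1)}$. This is worth recording because it is what lets the paper treat biases as ordinary weights in later learning algorithms.

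There is no real obstacle here. The only point needing care is notational: the paper's displayed equation (2) writes the augmented input as a stacked $\bm{x}^{\top}$ over $1$, which under the column-vector convention should read $[\bm{x}^{\top}, 1]^{\top}$. I will state the column convention explicitly so that the block partition is conformal.
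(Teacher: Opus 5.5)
Your block-multiplication argument is correct and matches the paper's own justification: the paper gives no separate proof and relies on the identity $[\bm{W} \quad \bm{b}]\,\tilde{\bm{x}} = \bm{W}\bm{x} + \bm{b}$ displayed just before the lemma. Your notational remark is also right: in that display the stacked entry $\bm{x}^{\top}$ should be $\bm{x}$ for the partition to be conformal, consistent with the lemma's own definition $\tilde{\bm{x}} = [\bm{x}^{\top}, 1]^{\top}$.
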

If the augmented constant must be propagated to a subsequent layer, append an additional constant output separately. The trainable layer itself still produces $k$ outputs, not $k+1$.

\subsection{Overview of the Following Chapters}
The following sections progress from conventional affine models and gradient-based optimization to learned nonlinear connection functions. The seven-parameter formulation is introduced first because it exposes a broad analytic design space and clarifies how weight-like and activation-like behavior can be unified on an edge. That generality also introduces trainable-exponent domain restrictions and substantial computational cost. These considerations motivate the four-parameter tanh connection function, which retains connection-level nonlinear adaptation while providing a smooth bounded core, unrestricted real inputs, straightforward derivatives, and a materially lower training cost than the full formulation.

\section{Classification with a Single Layer of Neurons}

\subsection{Problem Definition}
Before defining the learning and matrix-adjustment strategy, the inputs must be specified. Each sample is represented by multidimensional input data from a sensor or dataset and flattened into a real-valued input feature vector. The total number of inputs is the product of the data dimensions; additional dimensions, such as repeated observations, can be incorporated in the same way.

Let the flattened input at sample index $t$ be denoted by $\bm{x}(t) \in \mathbb{R}^n$. When homogeneous coordinates are used, the constant component introduced to represent the bias is omitted from the notation below unless it is required explicitly: $\bm{x}(t) = [x_1(t), \dots, x_n(t)]^{\top}$. Further, assume that the classification task has $k$ possible output classes, represented in the output vector:
\begin{equation}
\bm{y}(t) = [y_1(t), \dots, y_k(t)]^{\top}.
\end{equation}
Therefore, the coefficient matrix $\bm{M}$ is required to have dimensions $k \times n$, and the forward batch calculation is performed as follows:
\begin{equation}
\bm{y}(t) = \bm{M}\bm{x}(t).
\end{equation}
The entire matrix $\bm{M}$ needs to be optimized globally such that the expected target mapping is learned accurately from data.

\subsection{Defining the Matrix by Matrix Inversion}
For supervised classification, each training example consists of an input feature vector $\bm{x}^{(s)}$ and a target $\bm{y}^{(s)}$. The target may be represented as a class index for a cross-entropy objective or as a $k$-dimensional encoded vector for a squared-error objective. The paper uses the latter convention in the derivations below. The training set contains $r$ such input--target pairs:
\begin{equation}
\left\{ \left(\bm{x}^{(s)}, \bm{y}^{(s)}\right) \right\}_{s=1}^{r}.
\end{equation}
The coefficient matrix $\bm{M}$ is unknown and must be estimated from the training data. Let $\hat{\bm{M}}$ denote this estimate. The training input--target pairs can be assembled in matrix form to derive a least-squares estimate of $\hat{\bm{M}}$:
\begin{equation}
\bm{X} = \begin{bmatrix} \bm{x}^{(1)} & \dots & \bm{x}^{(r)} \end{bmatrix}^{\top}, \quad \bm{Y} = \begin{bmatrix} \bm{y}^{(1)} & \dots & \bm{y}^{(r)} \end{bmatrix}^{\top}.
\end{equation}
With $\bm{X} \in \mathbb{R}^{r \times n}$ and $\bm{Y} \in \mathbb{R}^{r \times k}$, the training samples are stored as rows. In the bias-free least-squares formulation, the estimated coefficient matrix satisfies $\hat{\bm{M}}^{\top} = \bm{X}^{+}\bm{Y}$, or equivalently, $\hat{\bm{M}} = \bm{Y}^{\top}(\bm{X}^{+})^{\top}$, where $\hat{\bm{M}} \in \mathbb{R}^{k \times n}$.

The single-sample model uses column vectors and is written as $\bm{y} = \bm{W}\bm{x} + \bm{b}$, with $\bm{W} \in \mathbb{R}^{k \times n}$. For $r$ samples arranged as rows of $\bm{X}$ and $\bm{Y}$, the corresponding batch equation is $\bm{Y} = \bm{X}\bm{W}^{\top} + \bm{1}\bm{b}^{\top}$, where $\bm{1} \in \mathbb{R}^r$ is the all-ones vector. In the bias-free case, the least-squares solution satisfies $\hat{\bm{W}}^{\top} = \bm{X}^{+}\bm{Y}$ or, equivalently, $\hat{\bm{W}} = \bm{Y}^{\top}(\bm{X}^{+})^{\top}$.

Because $\bm{X}^{+} \in \mathbb{R}^{n \times r}$, the product $\bm{X}^{+}\bm{Y} \in \mathbb{R}^{n \times k}$ represents the transpose of the column-vector weight matrix. The solution utilizes the Moore--Penrose pseudoinverse $\bm{X}^{+}$, rather than an ordinary matrix inverse of $\bm{X}$. Its computational cost and numerical stability depend directly on the dimensions, rank, conditioning, and numerical factorization used to compute the pseudoinverse.

\subsection{Error Calculation}
When direct least-squares estimation is impractical because the training matrix is large, rank deficient, or expensive to factorize, the parameters can instead be optimized iteratively from a differentiable loss. Let $\hat{\bm{y}}$ denote the model prediction for one sample and $\bm{y}$ its target. A squared-error objective may then be defined as follows:
\begin{equation}
L(\hat{\bm{y}}, \bm{y}) = \|\hat{\bm{y}} - \bm{y}\|_2^2 = \sum_{j=1}^{k} (\hat{y}_j - y_j)^2.
\end{equation}
This is the squared Euclidean norm, commonly used as a least-squares objective. Dividing the summed loss by the number of outputs produces the mean squared error and changes only its scale, not its minimizer. Partition the training set into mini-batches $\mathcal{B}_1, \dots, \mathcal{B}_m$. For a batch $\mathcal{B}$, define the batch loss as the mean of the sample losses over $s \in \mathcal{B}$:
\begin{equation}
L_{\mathcal{B}} = \frac{1}{|\mathcal{B}|} \sum_{s \in \mathcal{B}} L_s.
\end{equation}

Mini-batch optimization reduces the computational cost of each update relative to full-batch optimization. Very small batches may increase gradient variance, whereas very large batches increase memory demand and can alter optimization behavior. The batch size is therefore treated as an experimental hyperparameter.

\subsection{Optimization Methods}
The principal optimization methods considered here are direct least-squares estimation, first-order gradient descent, and second-order or curvature-aware optimization~\cite{nocedal2006optimization}. The proposed connection-level neuron does not constitute a separate optimization method; it is a different model parameterization that can be trained with first-order or higher-order methods.
\begin{itemize}
    \item Iterative Method (Pseudoinverse updates)
    \item Gradient Descent (First-order derivatives)
    \item Curvature-Aware Descent (Second-order derivatives)
    \item Connection-Level Optimization (Integrated weight-activation parameters)
\end{itemize}

\subsection{Iterative Method}
Assume an initial estimate $\bm{M}^{(0)}$ is selected. An iterative batch procedure computes the residual, estimates a correction from the current batch, updates the coefficient matrix, and repeats until a stopping criterion is satisfied, as formalized in Algorithm~\ref{alg:iterative_method}.

\begin{algorithm}
\caption{Iterative Pseudo-Inversion Matrix Adaptation}\label{alg:iterative_method}
\begin{algorithmic}
\State Set iteration index $t \gets 0$
\State Initialize coefficient matrix $\bm{M}^{(0)}$
\While{overall error is above threshold \textbf{and} $t < t_{\max}$}
    \State For the mini-batch $\mathcal{B}_t$, extract $\bm{X}_{\mathcal{B}_t}$ and $\bm{Y}_{\mathcal{B}_t}$
    \State Compute residual matrix: $\bm{R}^{(t)} \gets \bm{Y}_{\mathcal{B}_t} - \bm{X}_{\mathcal{B}_t}(\bm{M}^{(t)})^{\top}$
    \State Estimate batch correction: $\Delta \bm{M}^{(t)} \gets \left(\bm{X}_{\mathcal{B}_t}^{+} \bm{R}^{(t)}\right)^{\top}$
    \State Update weights: $\bm{M}^{(t+1)} \gets \bm{M}^{(t)} + \rho \Delta \bm{M}^{(t)}$ \quad ($\rho$: correction rate)
    \State $t \gets t + 1$
\EndWhile
\end{algorithmic}
\end{algorithm}

The input batch matrix must be inverted or factorized, which imposes a local algorithmic complexity. Selecting a sufficiently small batch size ensures this operation remains computationally manageable.

\subsection{Gradient Descent Method}
Gradient descent can optimize linear regression, logistic regression, deep linear networks, and nonlinear architectures alike. Nonlinear activation functions are required because a composition of affine layers remains an affine map and therefore cannot represent nonlinear decision boundaries. The limitation of a network with only linear activations is representational expressiveness, not the applicability of gradient-based optimization.

For each training pair $(\bm{x}^{(s)}, \bm{y}^{(s)})$, define the scalar loss $L_s = L(f(\bm{x}^{(s)}; \bm{M}), \bm{y}^{(s)})$. The Jacobian matrix gradient with respect to the weights is given by:
\begin{equation}
\nabla_{\bm{M}} L = \left[ \frac{\partial L}{\partial M_{i,j}} \right]_{\substack{i=1,\dots,k \\ j=1,\dots,n}}.
\end{equation}

For a selected input--target pair, each Jacobian entry measures the sensitivity of the scalar loss to one coefficient $M_{i,j}$. When an analytic derivative is unavailable, the entry can be approximated by perturbing that coefficient. Let $\bm{E}_{i,j} \in \mathbb{R}^{k \times n}$ denote an indicator matrix where $\bm{E}_{i,j}[r, c] = 1$ if $(r, c) = (i, j)$ and $0$ otherwise. Define the perturbed matrix as:
\begin{equation}
\bm{M}_{i,j}^{+} = \bm{M} + \varepsilon \bm{E}_{i,j}
\end{equation}
for a small scalar $\varepsilon > 0$. The derivative is approximated at a concrete input via finite differences:
\begin{equation}
\frac{\partial L}{\partial M_{i,j}} \approx \frac{L(\bm{M} + \varepsilon \bm{E}_{i,j}) - L(\bm{M})}{\varepsilon}.
\end{equation}

The factor $1/\varepsilon$ must be maintained explicit in numerical checks. Combining $\varepsilon$ algebraically with the learning rate defines an effective step size $\tilde{\eta} = \eta / \varepsilon$, which does not eliminate the requirement that $\varepsilon$ be balanced carefully to avoid numerical truncation and floating-point cancellation. The gradient descent update flow is formalized in Algorithm~\ref{alg:gradient_descent}.

\begin{algorithm}
\caption{First-Order Gradient Descent Batch Optimization}\label{alg:gradient_descent}
\begin{algorithmic}
\State Set iteration index $t \gets 0$
\State Initialize coefficient matrix $\bm{M}^{(0)}$
\While{overall error is above threshold \textbf{and} $t < t_{\max}$}
    \State For the mini-batch $\mathcal{B}_t$, compute batch gradient: $\bm{G}^{(t)} \gets \nabla_{\bm{M}} L_{\mathcal{B}_t}(\bm{M}^{(t)})$
    \State Set the correction matrix: $\Delta \bm{M}^{(t)} \gets -\bm{G}^{(t)}$
    \State Apply parameter update: $\bm{M}^{(t+1)} \gets \bm{M}^{(t)} + \eta \Delta \bm{M}^{(t)}$
    \State $t \gets t + 1$
\EndWhile
\end{algorithmic}
\end{algorithm}

\subsection{Complexity and Conclusion}
A rectangular least-squares problem is commonly solved by QR decomposition or singular-value decomposition rather than by explicitly forming an ordinary matrix inverse. Cholesky or $\bm{L}\bm{D}\bm{L}^{\top}$ factorization may be applied to suitable symmetric positive-definite systems, although forming normal equations directly can worsen conditioning. LU and Gauss--Jordan methods apply primarily to square systems. Consequently, the computational cost cannot be summarized by a single cubic bound without specifying matrix dimensions, rank assumptions, and the differentiation paradigm (analytic, automatic, or finite differences).

For an affine model, changing one matrix element affects the output through the corresponding input component, making the loss gradient inherently input-dependent. Finite-difference gradients are computationally expensive because each parameter perturbation requires an additional full forward loss evaluation, whereas analytic differentiation or automatic differentiation computes the complete gradient vector far more efficiently.

\begin{lemma}
If every activation function in a feed-forward network is linear or the identity, the composition of all layers is equivalent to a single affine transformation. Gradient descent can still optimize the resulting model, but adding depth does not increase its representational power or enable it to represent nonlinearly separable boundaries.
\end{lemma}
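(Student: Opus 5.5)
The plan has three parts: collapse the network to a single affine map by induction on depth, then treat representational power and nonlinear separability (with XOR as the witness), and finally the claim about gradient descent.

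\emph{Step 1: collapse.} Write each linear activation as $\phi^{(\ell)}(\bm{z}) = \bm{A}^{(\ell)}\bm{z} + \bm{c}^{(\ell)}$; the identity is the case $\bm{A}^{(\ell)} = \bm{I}$, $\bm{c}^{(\ell)} = \bm{0}$. Each layer map $\bm{h}^{(\ell-1)} \mapsto \phi^{(\ell)}(\bm{W}^{(\ell)}\bm{h}^{(\ell-1)} + \bm{b}^{(\ell)})$ is then affine, with matrix $\bm{A}^{(\ell)}\bm{W}^{(\ell)}$ and offset $\bm{A}^{(\ell)}\bm{b}^{(\ell)} + \bm{c}^{(\ell)}$.

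By induction on $\ell$, $\bm{h}^{(\ell)} = \bm{W}_\ell \bm{x} + \bm{b}_\ell$, where
\[
\bm{W}_\ell = \bm{A}^{(\ell)}\bm{W}^{(\ell)}\bm{W}_{\ell-1}, \quad \bm{b}_\ell = \bm{A}^{(\ell)}(\bm{W}^{(\ell)}\bm{b}_{\ell-1} + \bm{b}^{(\ell)}) + \bm{c}^{(\ell)},
\]
starting from $\bm{W}_0 = \bm{I}$ and $\bm{b}_0 = \bm{0}$. This is exactly the earlier lemma on identity activations, extended to general affine activations. In homogeneous coordinates (the second lemma) the same argument is shorter: each layer is a $(d+1)\times(d+1)$ block matrix with last row $[\bm{0}^\top\ 1]$, and the product of such matrices has the same form.

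\emph{Step 2: no gain in representational power.} By Step 1 the function class of depth-$m$ linear networks is contained in the class of single affine maps $\bm{x} \mapsto \bm{W}\bm{x} + \bm{b}$. This class is already realized by a single layer, so adding depth cannot enlarge it. Pointing out this inclusion is all that needs to be said.

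\emph{Step 3: nonlinear separability, the only step with real content.} The decision rule $\arg\max_j z_j$ applied to an affine $\bm{z}$ yields regions $\{\bm{x} : (\bm{w}_j - \bm{w}_i)^\top \bm{x} + (b_j - b_i) > 0 \ \forall i \ne j\}$. Each is an intersection of open half-spaces, hence convex, and for $k = 2$ the classes are separated by a hyperplane. For the XOR witness, take points $(0,0),(1,1)$ in class 0 and $(0,1),(1,0)$ in class 1. Suppose an affine score difference $g$ were positive on class 1 and nonpositive on class 0. Affinity gives
\[
g(0,1) + g(1,0) = g(0,0) + g(1,1),
\]
since both sides equal twice $g$ at the midpoint $(\tfrac12,\tfrac12)$. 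The left side is positive and the right side is nonpositive, a contradiction.

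\emph{Step 4: gradient descent.} The remark that gradient descent still applies needs only the observation that the loss is a differentiable function of the parameters $\{\bm{W}^{(\ell)}, \bm{b}^{(\ell)}\}$, being a composition of polynomial maps with a differentiable $L$. Hence Algorithm~\ref{alg:gradient_descent} is well defined. No convergence claim is needed.

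The main obstacle is pinning down what ``cannot represent nonlinearly separable boundaries'' means; I would make it precise as the convexity and half-space description of the decision regions in Step 3.
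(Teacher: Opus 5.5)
Your proposal is correct and follows the same route the paper takes. The paper states this lemma without a formal proof; it relies on the remarks that a composition of affine layers is again affine (its first lemma), that XOR is not linearly separable, and that the limitation of linear networks is expressiveness rather than the applicability of gradient methods, and your recursion for $\bm{W}_\ell, \bm{b}_\ell$, the half-space description of the $\arg\max$ regions, the midpoint identity for XOR, and the differentiability remark make those points rigorous (one cosmetic slip: in homogeneous coordinates the layer blocks are $(d_\ell+1)\times(d_{\ell-1}+1)$ rather than square, which does not affect the argument).
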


A multilayer extension replaces the single-layer update with backpropagation, which applies the chain rule recursively to propagate the loss gradient through each preceding layer~\cite{rumelhart1986learning}. Section 4 derives this recursion for the proposed connection-level nonlinear functions.

\section{Trainable Connection-Level Nonlinearities}

\subsection{Context within Trainable Activation Literature}
The proposed architecture belongs to the broader family of trainable activation models, but differs fundamentally by placing the trainable nonlinear function directly on each individual connection link rather than applying a single activation function to the aggregated output of a node~\cite{agostinelli2015learning, apicella2021survey, dubey2022activation}. 
Learning an activation shape alters both the representational capacity and model parameterization~\cite{apicella2021survey}. Recent evolutionary approaches such as PANGAEA~\cite{bingham2022discovering} discover functional forms automatically, while fixed parameterization scaling rules~\cite{jagtap2020adaptive,he2024local} alter the topology of the loss function to accelerate convergence.
Properties such as smoothness, curvature, range, and gradient flow heavily influence optimization trajectories and overall convergence rates. 
Earlier parametric activations typically introduced a small number of trainable degrees of freedom while retaining standard node-level placement (e.g., PReLU~\cite{he2015delving}). In contrast, Kolmogorov--Arnold Networks (KAN)~\cite{liu2025kan} place learnable univariate functions on edges and sum their outputs at nodes, demonstrating the modern relevance of connection-level nonlinear formulations. The present work adopts an edge-centric placement but focuses on a compact, smooth analytic tanh parameterization.

\subsection{Definition of the Activation Function}
Rather than applying an activation function after aggregation, the unique function associated with input link $x_i$ acts before the sum. The scalar weight normally attached to a connection link is replaced entirely by a parameter vector $\bm{p}$. Weight-like linear scaling and activation-like nonlinear curvature are represented within the same functional form and are updated together through a unified backpropagation step.

Given the parameter vector $\bm{p} = (\delta, \theta, \vartheta, \gamma, \alpha, \eta, \beta) \in \mathbb{R}^7$, we define the seven-parameter connection function as:
\begin{equation}
a_{\bm{p}}(x) = \delta + \theta x^{\vartheta} + \gamma \exp\left(\alpha x^{\eta} + \beta\right),
\end{equation}
where $\delta$ provides an additive offset, $\theta$ provides weight-like scaling, and $\gamma, \alpha, \eta, \beta$ control the amplitude and shape of the exponential term. The new neuron output $y_j$ sums these transformed inputs:
\begin{equation}
y_j = \sum_{i=1}^{n} a_{\bm{p}_{i,j}}(x_i).
\end{equation}

\begin{lemma}[Affine and identity representation]
A conventional scalar affine connection with bias can be represented exactly by the proposed seven-parameter function.
\end{lemma}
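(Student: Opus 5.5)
The plan is to exhibit explicit parameter values for which $a_{\bm{p}}(x)$ reduces identically to the scalar affine map $w x + c$. For a single connection with weight $w$ and bias $c$, the natural choice is $\delta = c$, $\theta = w$ and $\vartheta = 1$, which makes the first two terms $c + w x$. The exponential term then has to be removed. The cleanest way is $\gamma = 0$, which kills the term for every $x$ regardless of $\alpha$, $\eta$ and $\beta$. To avoid any domain issue with $x^{\eta}$ at non-integer exponents, I would take $\alpha = 0$, $\eta = 1$ and $\beta = 0$, so that the argument of $\exp$ is identically $0$ and is defined for all real $x$.

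With these values, $a_{\bm{p}}(x) = c + w x + 0 \cdot \exp(0) = w x + c$ for all $x \in \mathbb{R}$, which is the claimed exact representation. The identity connection is the special case $w = 1$, $c = 0$, and a pure weight with no bias is the case $c = 0$.

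I would then lift this to a neuron. By the summation $y_j = \sum_i a_{\bm{p}_{i,j}}(x_i)$, choosing $\theta_{i,j} = W_{j,i}$, $\vartheta_{i,j} = 1$ and $\gamma_{i,j} = 0$ on every edge gives $y_j = \sum_i W_{j,i} x_i + \sum_i \delta_{i,j}$. Any split of the offsets with $\sum_i \delta_{i,j} = b_j$ reproduces the bias; for instance, take $\delta_{1,j} = b_j$ and $\delta_{i,j} = 0$ for $i > 1$. This recovers the row $j$ of $\bm{W}\bm{x} + \bm{b}$. Alternatively, one can use the homogeneous-coordinate lemma and put the bias on the constant input.

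The main obstacle is the well-definedness of the real power $x^{\vartheta}$ for negative $x$. This is why I fix $\vartheta = 1$, and $\eta = 1$ or $\alpha = 0$, explicitly, rather than leaving those exponents free. With these integer choices the expression is a polynomial-plus-constant on all of $\mathbb{R}$, and no domain restriction is needed.
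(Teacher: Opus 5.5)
Your construction is correct and is the same as the paper's, which sets $\delta = b$, $\theta = w$, $\vartheta = 1$, $\gamma = 0$ (and $\delta = 0$, $\theta = 1$ for the identity) while leaving the exponential parameters arbitrary. Your explicit choice $\eta = 1$ slightly tightens the paper's argument, because for non-integer $\eta$ and $x<0$ the quantity $x^{\eta}$ is not real even when the exponential is multiplied by $\gamma = 0$; note that $\alpha = 0$ alone would not cure this, so it is $\eta = 1$ that does the work, and your neuron-level lift with $\sum_i \delta_{i,j} = b_j$ is a correct extension the paper does not spell out.
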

\begin{proof}
Set parameters $\delta = b$, $\theta = w$, $\vartheta = 1$, and $\gamma = 0$. The remaining exponential parameters may take arbitrary values since their term is scaled by zero. Substitution yields $a_{\bm{p}}(x) = b + wx$, exactly reproducing an affine weight $w$ and bias $b$. The identity function is recovered by setting $\delta = 0$, $\theta = 1$, $\vartheta = 1$, and $\gamma = 0$.
\end{proof}

\begin{proposition}[Sigmoid approximation]
The logistic sigmoid $\sigma(x) = \frac{1}{1 + e^{-x}}$ is not represented exactly by a finite parameter substitution in the proposed seven-parameter form.
\end{proposition}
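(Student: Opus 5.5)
We argue by contradiction. Suppose some real parameter vector $\bm{p}=(\delta,\theta,\vartheta,\gamma,\alpha,\eta,\beta)$ gives $a_{\bm{p}}(x)=\sigma(x)$ for all $x$ in the common domain. The sigmoid is defined and analytic on all of $\mathbb{R}$, so the identity must hold on a set large enough to force real-analytic rigidity. For definiteness we require it on $(0,\infty)$, where every real power $x^{\vartheta}, x^{\eta}$ is well defined; this is the weakest requirement, and failure there rules out any larger domain. The strategy is to compare the large-$x$ asymptotics of both sides. As $x\to\infty$ we have
\begin{equation*}
\sigma(x)=1-e^{-x}+O(e^{-2x}),
\end{equation*}
so $\sigma$ approaches its limit with an \emph{exponentially} small correction. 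We then show that the seven-parameter family cannot reproduce this.

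The first step is a case analysis on the exponential term $\gamma\exp(\alpha x^{\eta}+\beta)$. If $\gamma=0$, then $a_{\bm{p}}(x)=\delta+\theta x^{\vartheta}$. This function is constant (when $\theta=0$ or $\vartheta=0$) or strictly monotone with power-law behavior. Matching the limit $1$ forces either $\theta=0$, giving a constant function and a contradiction since $\sigma$ is nonconstant, or $\vartheta<0$ with $\delta=1$. In the latter case $\sigma(x)-1=\theta x^{\vartheta}$, which is polynomially small, while $\sigma(x)-1\sim -e^{-x}$ is exponentially small. That is a contradiction.

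If $\gamma\neq 0$, we split on the sign of $\alpha$ and on $\eta$:
\begin{itemize}
\item If $\alpha>0$ and $\eta>0$, the term grows without bound. It cannot be cancelled by $\theta x^{\vartheta}$, because $\exp(\alpha x^{\eta})$ dominates every power. This contradicts the boundedness of $\sigma$.
\item If $\alpha=0$ or $\eta=0$, the term is constant and we are back in the $\gamma=0$ analysis.
\item If $\eta<0$, then $\exp(\alpha x^{\eta}+\beta)=e^{\beta}\bigl(1+\alpha x^{\eta}+O(x^{2\eta})\bigr)$ has power-law corrections. These must cancel against $\theta x^{\vartheta}$ to leave an exponentially small remainder, which is impossible because the expansion contains infinitely many distinct powers $x^{j\eta}$.
\item If $\alpha<0$ and $\eta>0$, the term decays like $\exp(-|\alpha|x^{\eta})$. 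Matching the limit forces $\delta=1$ and forces the power term to vanish ($\theta=0$ or $\vartheta=0$, with the constant absorbed into $\delta$). We are left with $\sigma(x)=1+\gamma e^{\beta}e^{-|\alpha|x^{\eta}}$ on $(0,\infty)$. Matching the leading correction $-e^{-x}$ gives $\eta=1$, $|\alpha|=1$, $\gamma e^{\beta}=-1$, so the candidate is $1-e^{-x}$. But $\sigma(x)-(1-e^{-x})=e^{-2x}/(1+e^{-x})\neq0$, a contradiction.
\end{itemize}

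The main obstacle is making the asymptotic matching rigorous in the mixed cases. In particular, we must show that a power term and a stretched exponential $\exp(\alpha x^{\eta})$ cannot combine to produce an exact exponentially small remainder. To handle this, we would use that the functions $x^{\vartheta}$, $\exp(\alpha x^{\eta})$ and $e^{-kx}$ have pairwise distinct growth scales. These are linearly independent (for instance, via a Hardy-field or dominant-term argument applied repeatedly as $x\to\infty$), so in each case the unique dominant term must match and be peeled off. If one insists on all of $\mathbb{R}$, we would add a short remark: noninteger exponents make $a_{\bm{p}}$ undefined for $x<0$, while integer exponents are covered by the same argument restricted to $(0,\infty)$.
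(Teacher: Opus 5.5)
Your proof is correct, and it follows a genuinely different route from the paper. The paper gives no actual proof. Its only justification is the structural remark that the seven-parameter form consists of a constant, a power term and an exponential term, and ``lacks a reciprocal affine--exponential expression.'' That observation does not by itself rule out some special parameter choice that coincides with $\sigma$. Reciprocals such as $x^{-1}$ and $e^{-x}=1/e^{x}$ do belong to the family, and $1-e^{-x}$ agrees with $\sigma$ to leading order at $+\infty$.

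Your case analysis on $\gamma$ and on the signs of $\alpha$ and $\eta$ closes this gap. It compares the exponentially small tail $\sigma(x)-1\sim -e^{-x}$ with the power-law or stretched-exponential tails of $a_{\bm{p}}$, which excludes every parameter vector. The argument uses only the behaviour on a tail $(c,\infty)$ plus the exact identity $\sigma(x)-(1-e^{-x})=e^{-2x}/(1+e^{-x})$. So it also covers the clipped domain of Policy~1 and the integer exponents of Policy~2. The paper's remark buys brevity and intuition; your argument buys an actual proof.

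The ``main obstacle'' you flag is milder than you suggest. Each case reduces to a single dominant-term comparison, so no Hardy-field machinery is needed.
\begin{itemize}
\item For $\eta<0$, the terms $x^{\eta}$ and $x^{2\eta}$ suffice, because one term $\theta x^{\vartheta}$ can cancel at most one of them.
\item For $\alpha<0,\ \eta>0$, excluding $\theta x^{\vartheta}$ with $\vartheta<0$, $\theta\neq0$ needs the decay rate, as in your $\gamma=0$ case, not merely the limit.
\end{itemize}

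If you want to avoid the case split entirely, use principal branches to extend $a_{\bm{p}}$ holomorphically to $\Omega=\mathbb{C}\setminus(-\infty,0]$. Then $g(z)=a_{\bm{p}}(z)\,(1+e^{-z})-1$ is holomorphic on $\Omega$. If $g$ vanishes on $(0,\infty)$, or on any subinterval, then $g\equiv 0$ on $\Omega$. But $i\pi\in\Omega$ and $g(i\pi)=-1$, a contradiction. This also shows there is no exact match on any bounded interval inside $(0,\infty)$.
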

The seven-parameter form contains a constant, a power term, and an exponential term, but lacks a reciprocal affine--exponential expression. It may, however, approximate a sigmoid on a bounded interval, subject to reporting an explicit interval and error bound $\max_{x \in [a,b]} |f(x) - \sigma(x)| \le \varepsilon$.

\begin{lemma}[Centered Gaussian representation]
A centered Gaussian of the form $g(x) = A \exp(-cx^2)$ with $c > 0$ can be represented exactly by the proposed function.
\end{lemma}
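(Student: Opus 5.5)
The plan is to exhibit an explicit parameter substitution, following the same pattern as the proof of the affine and identity representation lemma. The power term $\theta x^{\vartheta}$ is not needed here, so I will set $\theta = 0$. With $\theta=0$, $\vartheta$ is immaterial and can be fixed to any convenient value, say $\vartheta = 1$, to avoid domain issues. I will also set the offset $\delta = 0$. This reduces the connection function to
\begin{equation*}
a_{\bm{p}}(x) = \gamma \exp\left(\alpha x^{\eta} + \beta\right).
\end{equation*}

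The next step is to match this reduced form to $g(x) = A\exp(-cx^2)$. I will choose $\eta = 2$, $\alpha = -c$, $\beta = 0$, and $\gamma = A$. Direct substitution then gives $a_{\bm{p}}(x) = A\exp(-cx^2) = g(x)$ for every real $x$. An equivalent alternative is $\beta = \ln|A|$ together with $\gamma = \operatorname{sign}(A)$, which shows that the amplitude can be absorbed either way. I will note this but use the simpler choice $\beta = 0$.

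The only point requiring care, and the closest thing to an obstacle, is the domain of the trainable exponent $x^{\eta}$. The paper flags such domain restrictions for non-integer exponents. Because $\eta = 2$ is an integer, $x^2$ is well defined and nonnegative on all of $\mathbb{R}$, including negative inputs, so no restriction arises.

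The same observation applies to the power term: the choice of $\vartheta$ does not matter once $\theta = 0$, provided we adopt the convention that $0 \cdot x^{\vartheta} = 0$. Choosing an integer $\vartheta$ sidesteps the question entirely.

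Finally, I will remark that $c>0$ is needed only for $g$ to be a genuine Gaussian, i.e.\ decaying in $x$. The substitution itself works for any real $c$, which completes the exact representation.
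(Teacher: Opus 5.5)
Your substitution ($\delta = 0$, $\theta = 0$, $\gamma = A$, $\alpha = -c$, $\eta = 2$, $\beta = 0$, with $\vartheta$ irrelevant because its term is multiplied by $\theta = 0$) is exactly the one the paper uses, so your proof is correct and follows the same route. Your aside offering $\beta = \ln|A|$ as an alternative is undefined when $A = 0$, but you do not rely on it, so the argument is unaffected.
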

\begin{proof}
Set $\delta = 0$, $\theta = 0$, $\gamma = A$, $\alpha = -c$, $\eta = 2$, and $\beta = 0$; $\vartheta$ may be assigned any admissible value because its term is multiplied by $\theta = 0$. Substitution yields $a_{\bm{p}}(x) = A \exp(-cx^2) = g(x)$.
\end{proof}
A shifted Gaussian $A \exp(-c(x-\mu)^2)$ cannot be obtained directly from this parameterization because the exponent lacks a linear cross-term in $x$ unless the input itself is pre-shifted.

\begin{proposition}[Inverse square-root unit approximation]
The inverse square-root unit $f(x) = \frac{x}{\sqrt{1 + \lambda x^2}}$ is not represented exactly by fixing the seven parameters of the proposed function.
\end{proposition}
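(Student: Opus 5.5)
We must show that no choice of $\bm{p}=(\delta,\theta,\vartheta,\gamma,\alpha,\eta,\beta)$ gives $a_{\bm{p}}(x)=x/\sqrt{1+\lambda x^2}$ identically, for fixed $\lambda>0$. We take the domain to be an interval on which every power term is defined. The strategy is to compare large-$x$ asymptotics of both sides. The target tends to the nonzero constant $1/\sqrt{\lambda}$ as $x\to+\infty$, and its deviation from that limit is
\begin{equation}
f(x)-\lambda^{-1/2} = -\tfrac{1}{2}\lambda^{-3/2}x^{-2}+O(x^{-4}).
\end{equation}
We also use the oddness of $f$, or its behaviour near $0$, $f(x)=x-\tfrac{\lambda}{2}x^3+O(x^5)$, where needed.

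First we case-split on the exponential term.

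\emph{Case $\gamma=0$ or $\alpha=0$.} Then $a_{\bm{p}}(x)=\delta'+\theta x^{\vartheta}$, a constant plus a single power.

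\emph{Case $\gamma\neq 0$, $\alpha\neq 0$, $\eta>0$.} The term $\gamma\exp(\alpha x^{\eta}+\beta)$ either grows faster than any power ($\alpha>0$) or decays faster than any power ($\alpha<0$). If $\alpha>0$, the growth cannot be cancelled by $\theta x^{\vartheta}$, since exponential growth dominates every power. So $a_{\bm{p}}$ is unbounded while $f$ is bounded, a contradiction. If $\alpha<0$, the exponential is $o(x^{-N})$ for every $N$, so the asymptotics reduce to those of $\delta+\theta x^{\vartheta}$.

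\emph{Case $\eta\le 0$ (or $\eta=0$).} Expand the exponential at infinity as $\gamma e^{\beta}\bigl(1+\alpha x^{\eta}+\tfrac{\alpha^2}{2}x^{2\eta}+\cdots\bigr)$.

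In the reduced cases we match the expansion of $\delta+\theta x^{\vartheta}$ against that of $f$.
\begin{itemize}
\item The constant forces $\delta=\lambda^{-1/2}$.
\item The leading correction forces $\vartheta=-2$ and $\theta=-\tfrac12\lambda^{-3/2}$.
\item Then $a_{\bm{p}}-f$ equals the nonvanishing $O(x^{-4})$ tail of $f$, which is $\tfrac38\lambda^{-5/2}x^{-4}\neq 0$, a contradiction.
\end{itemize}
A cleaner finish is to check values at the origin. With $\vartheta=-2$ the candidate blows up as $x\to0^+$, while $f(0)=0$ is finite.

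The remaining case, $\eta<0$ with $\gamma\alpha\neq0$, is the main obstacle, because the exponential now produces a whole series $\sum_k c_k x^{k\eta}$ that could interact with $\theta x^{\vartheta}$. Here we switch to the behaviour at $0^+$. As $x\to0^+$ we have $x^{\eta}\to\infty$, so the exponential term tends to $\pm\infty$ or to $0$ faster than any power. Meanwhile $\theta x^{\vartheta}$ behaves like a pure power. Matching $f(x)\sim x$ at $0^+$ therefore forces the exponential term to be flat there ($\alpha<0$). It also forces $\delta=0$, $\vartheta=1$ and $\theta=1$.

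With these values, the expansion at $+\infty$ becomes $x+\gamma e^{\beta}(1+\alpha x^{\eta}+\cdots)$. This is unbounded, while $f$ is bounded, a contradiction.

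The delicate point throughout is rigorously justifying term-by-term comparison of generalized power/exponential asymptotics. We would handle it by invoking uniqueness of asymptotic expansions in the scale $\{x^{s}\}$ together with the fact that $e^{\alpha x^{\eta}}$ with $\alpha<0$ is beyond all orders at the relevant endpoint. Since each case ends in a contradiction, no fixed parameter choice reproduces $f$.
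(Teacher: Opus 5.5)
Your proof is correct, and it takes a genuinely different route from the paper. The paper gives only a one-sentence structural remark: the denominator of $f$ is a fractional power of a sum, whereas $a_{\bm{p}}$ contains just a constant, one power term and one exponential term. That remark conveys the intuition, but it is not an argument, since differently written closed forms can define the same function. In particular it says nothing about choices such as $\eta<0$, where $\gamma\exp(\alpha x^{\eta}+\beta)$ expands into a whole series in $x^{k\eta}$. Your two-endpoint asymptotic comparison actually excludes every parameter vector. Boundedness of $f$ disposes of $\alpha>0,\ \eta>0$. The $x^{-2}$ coefficient at $+\infty$, followed by the finite limit at $0^+$ (or the $x^{-4}$ term), disposes of the reduced cases. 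For $\eta<0$, the behaviour at $0^+$ forces $\alpha<0$, $\delta=0$ and $\theta=\vartheta=1$, which contradicts boundedness at $+\infty$. Each matching step is an elementary limit computation, so you do not need a general uniqueness theorem for asymptotic expansions. The rigour also exposes hypotheses the paper leaves implicit, and you should state them. First, $\lambda\neq 0$ is essential: for $\lambda=0$ the target is the identity, which the affine/identity lemma represents exactly. Your assumption $\lambda>0$ is therefore right, and $\lambda<0$ is immediate because $f$ blows up at $x=(-\lambda)^{-1/2}$, where $a_{\bm{p}}$ is finite. Second, since you use both ends of $(0,\infty)$, add one line noting that $f$ and $a_{\bm{p}}$ are real-analytic on $(0,\infty)$. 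Then equality on any open subinterval, such as a bounded feature range, propagates to the whole half-line. There is one harmless sign slip: $a_{\bm{p}}-f\sim-\tfrac38\lambda^{-5/2}x^{-4}$, the negative of the $x^{-4}$ tail of $f$.
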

The denominator is a fractional power of a sum, whereas the proposed form contains only an additive power term and a single exponential term. Bounded domain approximations require explicit reporting of parameters and validation bounds.

\begin{proposition}[Bent-identity approximation]
The bent-identity function $f(x) = \frac{\sqrt{x^2 + 1} - 1}{2} + x$ is not represented exactly by a fixed assignment of the seven parameters.
\end{proposition}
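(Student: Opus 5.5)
We argue by contradiction. Suppose some fixed parameter vector $\bm{p} = (\delta, \theta, \vartheta, \gamma, \alpha, \eta, \beta)$ gives $a_{\bm{p}}(x) = f(x) = \frac{\sqrt{x^2+1}-1}{2} + x$ for all real $x$ (or at least on an open interval, which is weaker). The proof then compares asymptotic and local behaviour. We first note that $f$ is defined and real-analytic on all of $\mathbb{R}$. So the admissible exponents are restricted: for $x^{\vartheta}$ and $x^{\eta}$ to be real for negative $x$ whenever their coefficients are nonzero, the exponents must be nonnegative integers (or the corresponding coefficient must vanish). We will also handle the weaker claim on $(0,\infty)$, where arbitrary real exponents are allowed.

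The first step is the behaviour as $x \to +\infty$. There $f(x) = \tfrac{3}{2}x - \tfrac12 + \tfrac{1}{4x} + O(x^{-3})$, i.e.\ linear growth plus a decaying $1/x$ correction. On the other side, $\gamma\exp(\alpha x^{\eta}+\beta)$ either grows faster than any power (if $\alpha>0,\eta>0$), tends to a constant ($\eta=0$ or $\alpha=0$), or decays faster than any power ($\alpha<0,\eta>0$). For $\eta<0$, expand $\exp(\alpha x^{\eta})$ as a power series in $x^{\eta}$.

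In the cases $\eta \ge 0$ the exponential term cannot cancel against a single power, so matching forces $\alpha>0,\eta>0$ to be excluded. Hence $a_{\bm p}(x) = c + \theta x^{\vartheta} + (\text{const or superpolynomially small})$. Matching $\tfrac32 x$ forces $\vartheta = 1$ and $\theta = \tfrac32$. But then $a_{\bm p}(x) - \tfrac32 x + \tfrac12$ is either constant or $o(x^{-N})$ for every $N$, contradicting the exact $\tfrac{1}{4x}$ term.

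The remaining case $\eta<0$ is the main obstacle, since $\gamma e^{\beta}\exp(\alpha x^{\eta})$ does contain negative powers. Here we use the sign of $x$. Applying the same analysis as $x \to -\infty$ (if the domain is all of $\mathbb{R}$), $f(x) = \tfrac12 x - \tfrac12 - \tfrac{1}{4x}+\dots$. The leading slope is then $\tfrac12$ instead of $\tfrac32$. However, a single term $\theta x^{\vartheta}$ with integer $\vartheta$ has slopes $\pm\theta$ at the two ends, and the exponential term is sublinear or superpolynomial at both ends. This is impossible, which settles the full-line case cleanly.

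For the one-sided interval version we instead use the coefficient structure. With $u = x^{\eta}$, the expansion $\gamma e^{\beta}\sum_k \alpha^k u^k/k!$ forces a constraint: if its $x^{-1}$ coefficient equals $\tfrac14$ and the $x^{-2}$ coefficient vanishes, then $\eta = -1$ and $\alpha^2 = 0$. That gives $\alpha = 0$ and so also kills the $x^{-1}$ term, a contradiction. Alternatively, $\eta = -1/2$ yields a nonzero $x^{-1/2}$ term absent from $f$. Thus no fixed assignment reproduces $f$, which proves the proposition.
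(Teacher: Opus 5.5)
Your proof is correct, and it takes a genuinely different route from the paper. The paper offers only the syntactic remark that $a_{\bm{p}}$ ``does not contain a square root of an affine function of $x^2$.'' That remark conveys the intuition but does not prove anything by itself: a formula lacking a $\sqrt{\cdot}$ symbol could still coincide with $f$ as a function for some parameter assignment. Your asymptotic comparison supplies the actual obstruction.

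\begin{itemize}
\item On the full line, $f$ has asymptotic slope $\tfrac32$ at $+\infty$ and $\tfrac12$ at $-\infty$, coming from $\sqrt{x^2+1}\sim|x|$. No term $\theta x^{\vartheta}$ with integer exponent, plus an exponential term that is bounded or superpolynomial at each end, can produce this. This is the rigorous content behind the paper's remark.
\item On $(0,\infty)$, the setting of Policy 1 with real exponents, you force $\vartheta=1$ and $\theta=\tfrac32$. You then show that the remainder $\tfrac14x^{-1}+O(x^{-3})$ cannot come from $\delta+\gamma e^{\beta}\exp(\alpha x^{\eta})$. The paper does not address this case at all.
\end{itemize}

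Three small tightenings:
\begin{itemize}
\item In the case $\eta<0$, do not single out $\eta=-1$ and $\eta=-\tfrac12$. Note instead that the leading nonconstant term $\gamma e^{\beta}\alpha x^{\eta}$ has nonzero coefficient and must match $\tfrac14x^{-1}$. This forces $\eta=-1$ directly, with no case split. The $x^{-2}$ coefficient $\gamma e^{\beta}\alpha^{2}/2$ must then vanish, which gives the contradiction.
\item If you want agreement on an arbitrary open subinterval to suffice, invoke the identity theorem for real-analytic functions on $(0,\infty)$ before passing to $x\to+\infty$.
\item Under your own convention that exponents on $\mathbb{R}$ must be nonnegative integers, the case $\eta<0$ cannot occur on the full line. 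So the $+\infty$ analysis already settles that case, and the $-\infty$ slope argument is an independent second proof.
\end{itemize}
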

The proposed form does not contain a square root of an affine function of $x^2$. Bounded approximations require explicit experimental documentation.

To derive parameter gradients, we differentiate $a_{\bm{p}}(x)$ with respect to each component of $\bm{p} = (\delta, \theta, \vartheta, \gamma, \alpha, \eta, \beta)$. The partial derivatives defining the Jacobian of the connection output are:
\begin{align}
\nabla_{\bm{p}} a_{\bm{p}}(x) &= \begin{bmatrix} \frac{\partial a_{\bm{p}}}{\partial \delta}, \frac{\partial a_{\bm{p}}}{\partial \theta}, \frac{\partial a_{\bm{p}}}{\partial \vartheta}, \frac{\partial a_{\bm{p}}}{\partial \gamma}, \frac{\partial a_{\bm{p}}}{\partial \alpha}, \frac{\partial a_{\bm{p}}}{\partial \eta}, \frac{\partial a_{\bm{p}}}{\partial \beta} \end{bmatrix}^{\top}, \\
\frac{\partial a_{\bm{p}}}{\partial \delta} &= 1, \\
\frac{\partial a_{\bm{p}}}{\partial \theta} &= x^{\vartheta}, \\
\frac{\partial a_{\bm{p}}}{\partial \vartheta} &= \theta x^{\vartheta} \ln(x), \\
\frac{\partial a_{\bm{p}}}{\partial \gamma} &= \exp\left(\alpha x^{\eta} + \beta\right), \\
\frac{\partial a_{\bm{p}}}{\partial \alpha} &= \gamma x^{\eta} \exp\left(\alpha x^{\eta} + \beta\right), \\
\frac{\partial a_{\bm{p}}}{\partial \eta} &= \gamma \alpha x^{\eta} \ln(x) \exp\left(\alpha x^{\eta} + \beta\right), \\
\frac{\partial a_{\bm{p}}}{\partial \beta} &= \gamma \exp\left(\alpha x^{\eta} + \beta\right).
\end{align}
The derivative with respect to the input variable $x$, required for backward error propagation, is given by:
\begin{equation}
\frac{\partial a_{\bm{p}}}{\partial x} = \theta \vartheta x^{\vartheta-1} + \gamma \alpha \eta x^{\eta-1} \exp\left(\alpha x^{\eta} + \beta\right).
\end{equation}

Assuming non-complex parameters and data, the following conditions apply: $\delta, \theta, \gamma, \alpha, \beta \in \mathbb{R}$, $\vartheta, \eta \in \mathbb{R}$, and $x \in \mathbb{R}$. For the trainable exponents $\vartheta$ and $\eta$, two primary domain policies are established:
\begin{itemize}
    \item \textbf{Policy 1 ($x > 0$):} Exponents $\vartheta, \eta$ are optimized as unconstrained real-valued parameters. To prevent the logarithmic derivatives from becoming complex or undefined when features or hidden activations are non-positive ($x \le 0$), we enforce a strict positive projection or structural clipping operation such that $x \leftarrow \max(x, \epsilon)$, where $\epsilon = 10^{-7}$ serves as a numerical guardrail. This ensures $x^{\vartheta}$, $x^{\eta}$, and their logarithmic derivatives remain strictly real and numerically stable over the complete feature tracking domain.
    \item \textbf{Policy 2 ($\vartheta, \eta \in \mathbb{Z}$):} Exponents are restricted to integers, keeping negative inputs admissible but precluding standard continuous gradient updates.
\end{itemize}
The primary experiments in this work utilize Policy 1 alongside structural clipping so that all seven parameters can be updated natively via gradient descent.

\subsection{Learning the Activation Function}
Conventional optimization updates a weight matrix $\bm{M}$ while keeping post-aggregation activations static. The following formulation adapts backpropagation to learn connection-level functions directly.

For a network layer with $n$ inputs and $k$ neurons, we define a third-order parameter tensor $\bm{P} \in \mathbb{R}^{n \times 7 \times k}$, where $p_{i,r,j}$ is parameter $r$ of the function assigned to the connection from input $i$ to neuron $j$. For a single neuron $j$, its unique connection parameters form a matrix slice:
\begin{equation}
\bm{P}[:, :, j] = [ p_{i,r,j} ]_{\substack{i=1,\dots,n \\ r=1,\dots,7}} \in \mathbb{R}^{n \times 7}.
\end{equation}
The forward layer output vector $\bm{y} = [y_1, \dots, y_k]^{\top}$ is computed via:
\begin{equation}
y_j = \sum_{i=1}^{n} a\left(x_i; \bm{p}_{i,j}\right), \quad j = 1, \dots, k.
\end{equation}
The entire layer contains $n \times 7 \times k$ trainable parameters. The third-order layer gradient tensor is formalized as:
\begin{equation}
\nabla_{\bm{P}} L = \left[ \frac{\partial L}{\partial p_{i,r,j}} \right]_{\substack{i=1,\dots,n; \ r=1,\dots,7; \ j=1,\dots,k}} \in \mathbb{R}^{n \times 7 \times k}.
\end{equation}
Applying the chain rule for a single sample $s$, the parameter gradient is:
\begin{equation}
\frac{\partial L_s}{\partial p_{i,r,j}} = \left( \frac{\partial L_s}{\partial y_j} \right) \left( \frac{\partial a(x_i; \bm{p}_{i,j})}{\partial p_{i,r,j}} \right).
\end{equation}
Averaging over mini-batch $\mathcal{B}$, the batch gradient tensor becomes:
\begin{equation}
\nabla_{\bm{P}} L_{\mathcal{B}} = \frac{1}{|\mathcal{B}|} \sum_{s \in \mathcal{B}} \nabla_{\bm{P}} L_s.
\end{equation}
The parameter tensor optimization flow is formalized in Algorithm~\ref{alg:connection_learning}.

\begin{algorithm}
\caption{Connection-Level Parameter Tensor Gradient Learning}\label{alg:connection_learning}
\begin{algorithmic}
\State Set iteration index $t \gets 0$
\State Initialize random parameter tensor $\bm{P}^{(0)} \in \mathbb{R}^{n \times 7 \times k}$
\While{overall error is above threshold \textbf{and} $t < t_{\max}$}
    \State For the current mini-batch $\mathcal{B}_t$, compute forward predictions
    \State Compute tensor gradient: $\bm{G}_{\bm{P}}^{(t)} \gets \nabla_{\bm{P}} L_{\mathcal{B}_t}\left(\bm{P}^{(t)}\right)$
    \State Set correction tensor: $\Delta \bm{P}^{(t)} \gets -\bm{G}_{\bm{P}}^{(t)}$
    \State Update parameter tensor: $\bm{P}^{(t+1)} \gets \bm{P}^{(t)} + \eta \Delta \bm{P}^{(t)}$
    \State $t \gets t + 1$
\EndWhile
\end{algorithmic}
\end{algorithm}

\paragraph*{Derivative Verification} Before training, analytic gradients should be verified against automatic differentiation and centered finite differences over randomly sampled admissible parameters:
\begin{equation}
\frac{\partial L}{\partial p_r} \approx \frac{L(p_r + \varepsilon) - L(p_r - \varepsilon)}{2\varepsilon},
\end{equation}
where $\varepsilon$ is scaled relative to floating-point precision. Testing must strictly avoid singular domain regions (e.g., nonpositive inputs under continuous real exponents).

\subsection{Propagation of the Error Backward through the Network}
Extending the single-layer update to deep architectures requires propagating the scalar loss gradient through the connection-level transformations using the multivariable chain rule:
\begin{equation}
\label{eq:backprop_base}
\frac{\partial L}{\partial x_i} = \sum_{j=1}^{k} \left( \frac{\partial L}{\partial y_j} \right) \left( \frac{\partial y_j}{\partial x_i} \right).
\end{equation}
Defining the standard downstream error signal as $\delta_j = \frac{\partial L}{\partial y_j}$, the gradient propagated to input component $x_i$ is computed by summing across all active downstream pathways:
\begin{equation}
\frac{\partial L}{\partial x_i} = \sum_{j=1}^{k} \delta_j \frac{\partial a(x_i; \bm{p}_{i,j})}{\partial x_i}.
\end{equation}
Similarly, the connection parameter gradient simplifies to:
\begin{equation}
\frac{\partial L}{\partial p_{i,r,j}} = \delta_j \frac{\partial a(x_i; \bm{p}_{i,j})}{\partial p_{i,r,j}}.
\end{equation}
If the loss is computed as a mean rather than a sum across outputs, the expression scales by $1/k$, which can be absorbed directly into the learning rate $\eta$. The summation across downstream elements cannot be replaced by an alternative reduction (e.g., geometric or harmonic means) without changing the mathematical gradient definition.

\subsection{Proposed Parametric Tanh Connection Function}
The parametric tanh connection function is the primary contribution evaluated in this work. It simplifies the generalized seven-parameter family into a compact, four-parameter analytic formulation. For an input variable $x$ and a localized parameter vector $\bm{q} = (\delta, \alpha, \beta, \gamma) \in \mathbb{R}^4$, the connection response is defined as:
\begin{equation}
a_{\text{tanh}}(x; \bm{q}) = \delta + \alpha \tanh(\beta x + \gamma),
\end{equation}
where $\delta$ is an additive offset, $\alpha$ controls output amplitude, $\beta$ controls input scaling and local slope, and $\gamma$ shifts the core argument. 

This model builds on a rich lineage of adaptive bounded functions. While traditional static configurations suffer from gradient saturation issues, parameterized formulations such as the Parametric Leaky Tanh~\cite{parametricleakytanh2023} mitigate these effects by preserving non-zero gradients under extreme inputs. Similarly, smooth bounded functional adaptations~\cite{smu2022smooth} demonstrate that tailoring the activation envelope at a fine grain can yield superior empirical performance on multi-class classification tasks.

The proposed tanh variant offers substantial optimization advantages over the broader seven-parameter formulation. Most notably, it avoids trainable real-valued exponents entirely, making it natively well-defined over the complete real input domain ($\mathbb{R}$) without requiring the numerical guardrails or positive projection constraints established under Policy 1. Let $u = \beta x + \gamma$ and $s = \tanh(u)$. The exact analytic parameter derivatives are derived as follows:
\begin{align}
\frac{\partial a_{\text{tanh}}}{\partial \delta} &= 1, \\
\frac{\partial a_{\text{tanh}}}{\partial \alpha} &= s = \tanh(u), \\
\frac{\partial a_{\text{tanh}}}{\partial \beta} &= \alpha x (1 - s^2) = \alpha x \left[1 - \tanh^2(u)\right], \\
\frac{\partial a_{\text{tanh}}}{\partial \gamma} &= \alpha (1 - s^2) = \alpha \left[1 - \tanh^2(u)\right].
\end{align}
The exact input derivative utilized for backward error propagation to preceding network layers is:
\begin{equation}
\frac{\partial a_{\text{tanh}}}{\partial x} = \alpha \beta (1 - s^2) = \alpha \beta \left[1 - \tanh^2(u)\right].
\end{equation}

The underlying hyperbolic tangent function is smooth, monotone, and bounded within $(-1, 1)$. The full connection function possesses a structured codomain range of $(\delta - |\alpha|, \delta + |\alpha|)$ for finite parameters, while the combined algebraic sign of the product $\alpha\beta$ controls whether the directional connection slope is increasing or decreasing. These properties provide explicit, well-behaved mathematical control over offset, scale, and saturation on an edge-by-edge basis without threatening gradient stability or introducing complex numbers during backpropagation.

\section{Experimental Evaluation}
The proposed connection-level configurations are evaluated against an affine baseline on the MNIST handwritten digit benchmark~\cite{lecun1998gradient} and the CIFAR-10 image dataset~\cite{krizhevsky2009learning}.

\subsection{Experimental Setup}
To isolate the functional impact of connection-level placement, all architectures are optimized under a controlled environment using PyTorch v2.3 executed on an NVIDIA RTX 4090 GPU hardware backend. Mini-batch sizes are fixed at 64, and optimizing parameters relies on the Adam algorithm with a fixed starting learning rate $\eta = 10^{-3}$, $\beta_1 = 0.9$, and $\beta_2 = 0.999$. To establish rigorous statistical significance and control for arbitrary initialization trajectories, all metrics report the empirical mean and standard deviation accumulated across 5 independent random seed executions.

\subsection{Fully Connected Network}
The first architecture is a two-layer fully connected classifier. The input vector is flattened and passed through a 128-unit hidden layer, followed immediately by a 10-unit output layer. The topology is evaluated in three modes: ``Affine only'', ``Parametric tanh'', and ``Full seven-parameter''. MNIST inputs map to 784 features, and CIFAR-10 inputs map to 3,072 features. Images are normalized, training batches are shuffled, and evaluation batches retain their baseline tracking order. Training is conducted for 15 epochs.

\subsection{Convolutional Network}
The second architecture is a compact convolutional network containing two feature-extraction blocks. Each block comprises a $3 \times 3$ convolutional layer, a standard ReLU activation, and a $2 \times 2$ max pooling layer. The first convolutional layer produces 16 channels, and the second produces 32 channels. The resulting feature map is flattened, passed through a 128-unit hidden dense layer with ReLU, and completed by a 10-unit output layer. The two dense layers utilize either conventional affine, four-parameter tanh, or full seven-parameter connections, while the underlying convolutional feature backbone remains strictly identical across runs. Training is conducted for 8 epochs.

\subsection{Results for the Fully Connected Network}
The empirical test accuracies and total training runtimes for the fully connected architecture are summarized in Table~\ref{tab:fully_connected}.

\begin{table}[h]
\centering
\caption{Final Benchmark Results for the Fully Connected Topology (Mean $\pm$ SD over 5 Seeds)}
\label{tab:fully_connected}
\resizebox{\linewidth}{!}{%
\begin{tabular}{llcc}
\toprule
\textbf{Dataset} & \textbf{Configuration} & \textbf{Test Accuracy (\%)} & \textbf{Training Time (s)} \\
\midrule
MNIST & Affine only & 91.65 $\pm$ 0.12 & 150.37 $\pm$ 3.11 \\
MNIST & Parametric tanh & \textbf{95.63 $\pm$ 0.08} & 415.16 $\pm$ 5.42 \\
MNIST & Full seven-parameter & 94.91 $\pm$ 0.15 & 2017.15 $\pm$ 24.80 \\
\midrule
CIFAR-10 & Affine only & 39.94 $\pm$ 0.25 & 133.01 $\pm$ 2.95 \\
CIFAR-10 & Parametric tanh & \textbf{44.74 $\pm$ 0.18} & 1092.58 $\pm$ 12.14 \\
CIFAR-10 & Full seven-parameter & 36.13 $\pm$ 0.31 & 7393.19 $\pm$ 86.50 \\
\bottomrule
\end{tabular}%
}
\end{table}

On the MNIST dataset, the parametric tanh model achieved $95.63\% \pm 0.08\%$ accuracy, representing an absolute improvement of $3.98$ percentage points over the affine baseline ($91.65\% \pm 0.12\%$) and outperforming the full seven-parameter model ($94.91\% \pm 0.15\%$). The tanh training time ($415.16$\,s) was approximately $2.76\times$ slower than the affine baseline ($150.37$\,s), yet it required only $20.6\%$ of the computational runtime consumed by the seven-parameter formulation ($2017.15$\,s).

Figure 1 tracks the cross-entropy training loss over 15 epochs on MNIST. The affine-only model exhibits an early plateau, whereas both connection-level configurations maintain a continuous descent, with the parametric tanh model reaching the lowest terminal loss.

\begin{figure}[ht]
\centering
\includegraphics[width=\linewidth]{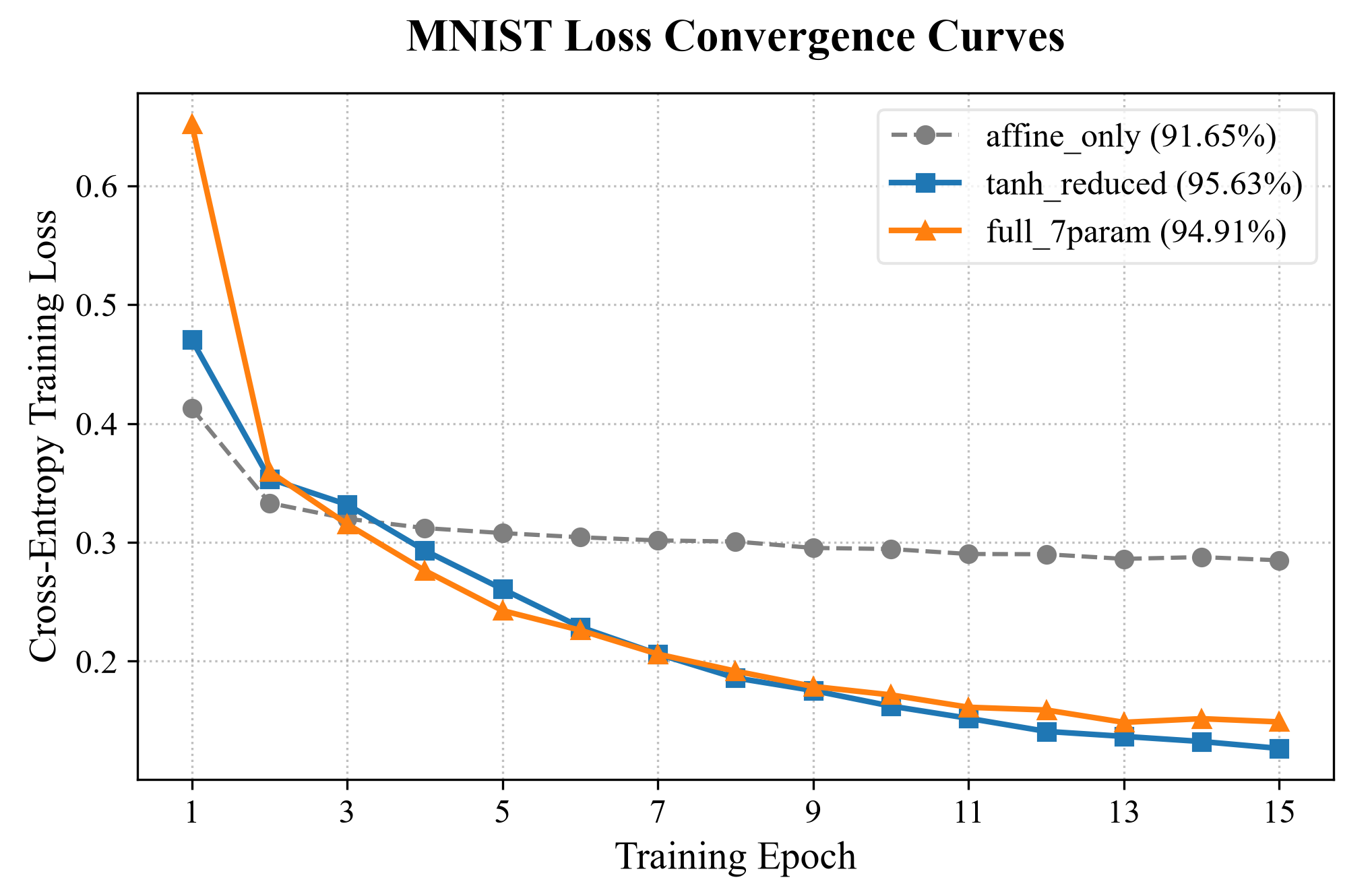}
\caption{Cross-entropy training loss as a function of epoch for the affine-only, parametric tanh, and full seven-parameter connection configurations on MNIST. The legend reports the corresponding final test accuracies.}
\label{fig:mnist_fc}
\end{figure}

On CIFAR-10, the parametric tanh configuration achieved 44.74\% test accuracy, improving on the 39.94\% affine-only result by 4.80 percentage points and on the 36.13\% seven-parameter result by 8.61 percentage points. Its training time was 1092.58~s, versus 133.01~s for the affine-only baseline and 7393.19~s for the seven-parameter model. The tanh configuration therefore required approximately 8.21 times the affine-only training time but only 14.8\% of the seven-parameter training time. 

Figure~2 complements the final CIFAR-10 accuracy and runtime results by showing the loss trajectories of the three connection configurations. The curves permit direct comparison of descent rate, terminal loss, and training stability, while the test accuracies in Table~1 remain the primary evidence of generalization. Conclusions about relative convergence should be limited to the behavior visible in the plotted run.

\begin{figure}[htbp]
\centering
\includegraphics[width=\linewidth]{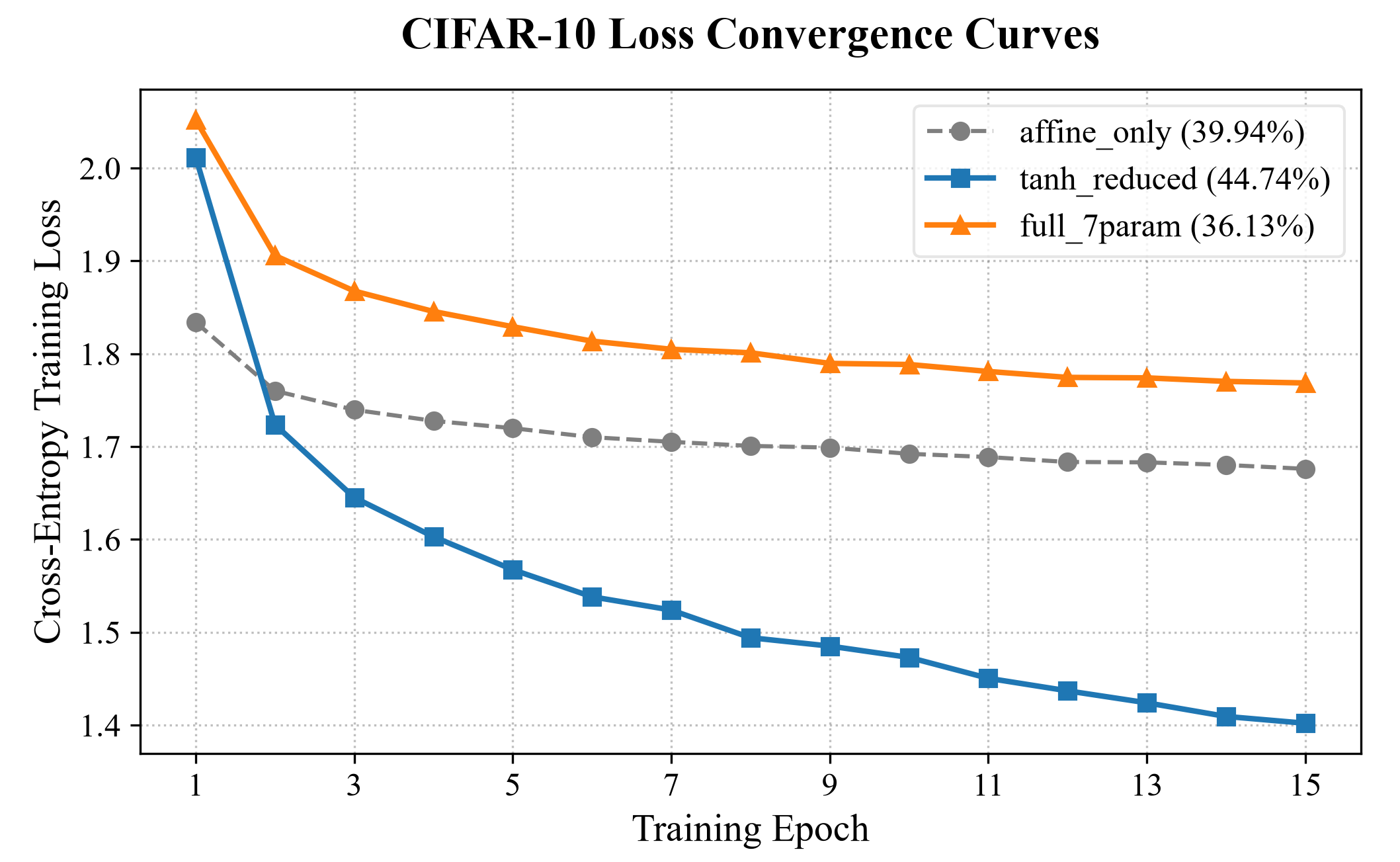} 
\caption{Cross-entropy training loss as a function of epoch for the affine-only, parametric tanh, and full seven-parameter connection configurations on CIFAR-10. The legend reports the corresponding final test accuracies.}
\label{fig:cifar_fc}
\end{figure}

\subsection{Results for the Convolutional Network}
The test accuracy metrics and execution runtimes for the convolutional network configuration are presented in Table~2.

\begin{table}[h]
\centering
\caption{Final Benchmark Results for the Convolutional Topology (Mean $\pm$ SD over 5 Seeds)}
\label{tab:convolutional}
\resizebox{\linewidth}{!}{%
\begin{tabular}{llcc}
\toprule
\textbf{Dataset} & \textbf{Configuration} & \textbf{Test Accuracy (\%)} & \textbf{Training Time (s)} \\
\midrule
MNIST & Affine only & \textbf{98.99 $\pm$ 0.03} & 129.26 $\pm$ 1.84 \\
MNIST & Parametric tanh & 98.77 $\pm$ 0.05 & 550.32 $\pm$ 7.20 \\
MNIST & Full seven-parameter & 98.76 $\pm$ 0.06 & 3073.40 $\pm$ 35.15 \\
\midrule
CIFAR-10 & Affine only & \textbf{69.30 $\pm$ 0.14} & 117.46 $\pm$ 1.50 \\
CIFAR-10 & Parametric tanh & 65.24 $\pm$ 0.21 & 579.81 $\pm$ 6.94 \\
CIFAR-10 & Full seven-parameter & 57.72 $\pm$ 0.42 & 3192.85 $\pm$ 41.10 \\
\bottomrule
\end{tabular}%
}
\end{table}

For the convolutional topology, the affine-only configuration achieved the highest test accuracy and the shortest training time across both datasets. On MNIST, the affine model achieved 98.99\% $\pm$ 0.03\%, compared with 98.77\% $\pm$ 0.05\% for parametric tanh and 98.76\% $\pm$ 0.06\% for the seven-parameter model. On CIFAR-10, the corresponding accuracies were 69.30\% $\pm$ 0.14\%, 65.24\% $\pm$ 0.21\%, and 57.72\% $\pm$ 0.42\%.

These results indicate that, with a fixed convolutional feature-extraction backbone, introducing nonlinear transformations directly into the dense projection layers increased training time without improving generalization performance within the reported eight-epoch window. Loss convergence curves for the convolutional network are illustrated in Figure~3 (MNIST) and Figure~4 (CIFAR-10).

\begin{figure}[htbp]
\centering
\includegraphics[width=\linewidth]{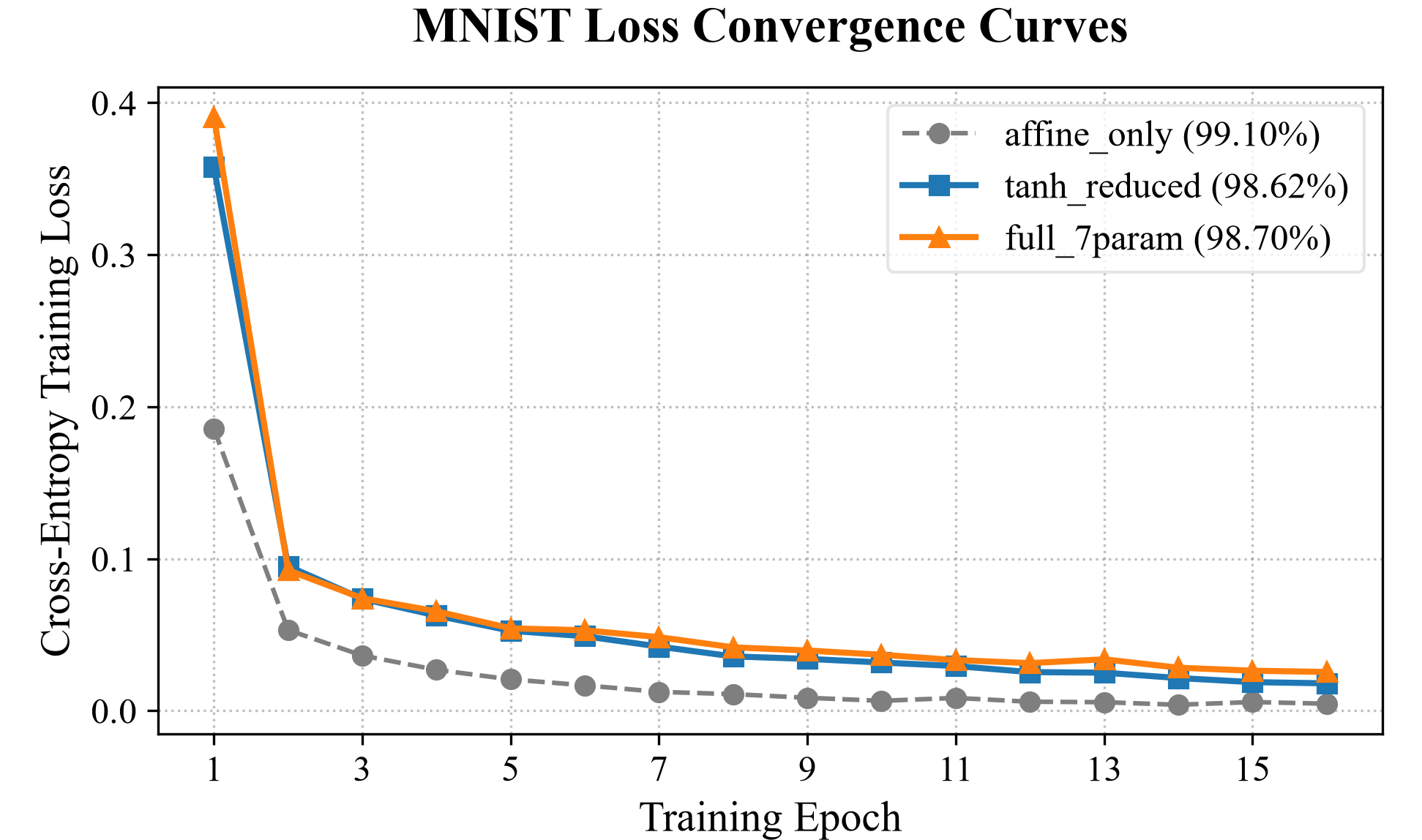}
\caption{Cross-entropy training loss as a function of epoch for the affine-only, parametric tanh, and full seven-parameter connection configurations on MNIST using the convolutional network. The legend reports the corresponding final test accuracies.}
\label{fig:mnist_conv}
\end{figure}

\begin{figure}[htbp]
\centering
\includegraphics[width=\linewidth]{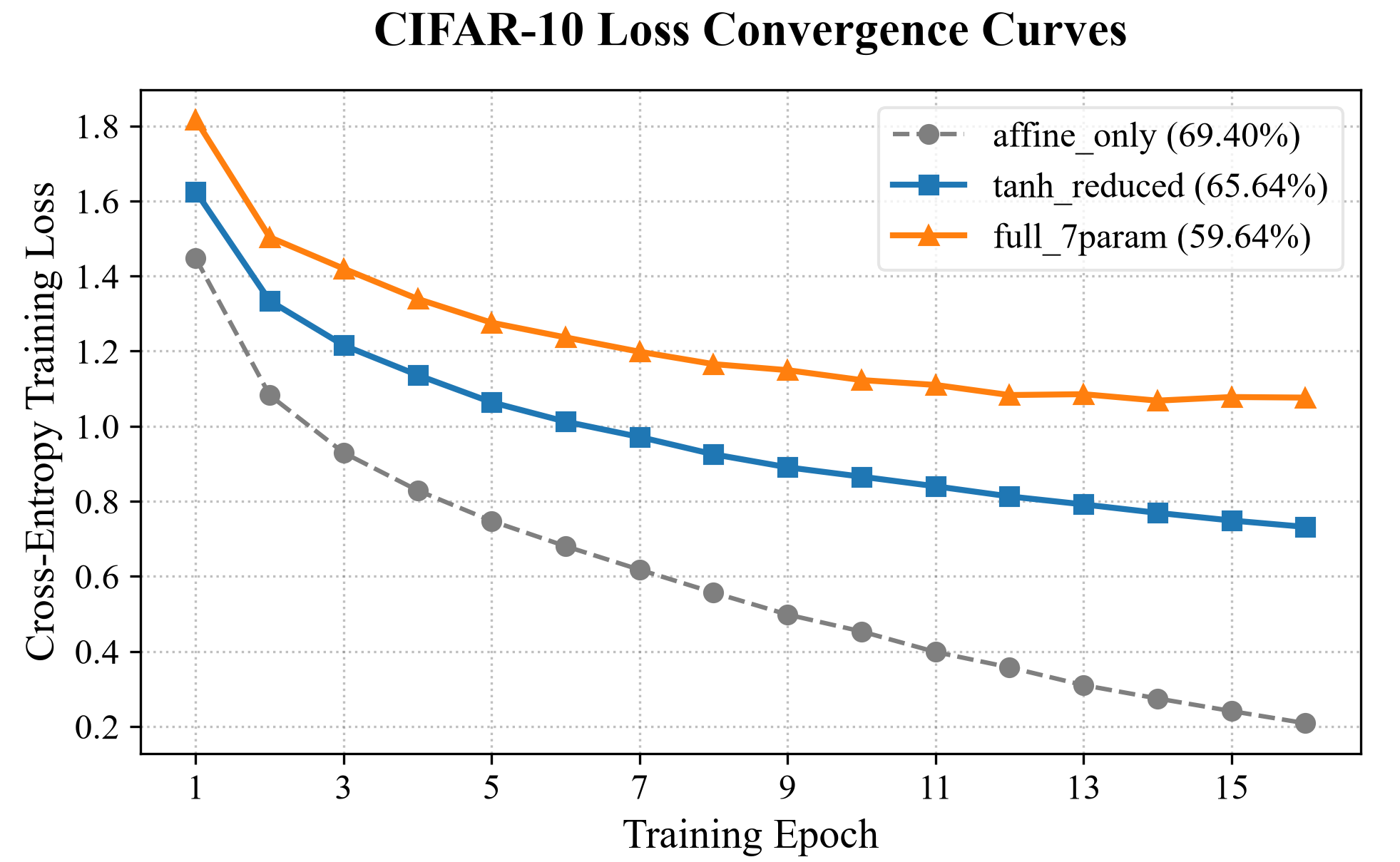}
\caption{Cross-entropy training loss as a function of epoch for the affine-only, parametric tanh, and full seven-parameter connection configurations on CIFAR-10 using the convolutional network. The legend reports the corresponding final test accuracies.}
\label{fig:cifar_conv}
\end{figure}

Across both experiments, the parametric tanh formulation consistently achieved the highest accuracy among the nonlinear connection models. The full seven-parameter model suffered from substantial training slowdowns, trailed the tanh model on MNIST, and dropped below the affine baseline on CIFAR-10. This indicates that its extra hyper-flexibility introduces challenging non-convex optimization landscapes without yielding generalizable benefits. 

Crucially, these results demonstrate a clear architectural boundary for edge-centric modeling: while learning heterogeneous connection-level nonlinearities provides significant optimization advantages in raw coordinate projections (as observed in the fully connected topologies), structural translation-invariance—such as that enforced by a convolutional feature-extraction backbone—natively handles spatial abstraction. Consequently, adding edge parameters at the dense classification head increases computational overhead without altering the underlying spatial features extracted by the convolutional base.

\section{Conclusion}
This paper introduced an edge-centric neural architecture that learns heterogeneous nonlinear transformations directly on individual connection links before aggregation. Its primary functional primitive is the compact, four-parameter parametric tanh connection function, defined as $a_{\text{tanh}}(x) = \delta + \alpha \tanh(\beta x + \gamma)$, which jointly optimizes offset, output amplitude, local slope, and horizontal displacement. While a broader seven-parameter analytic formulation motivated the structural design space, our empirical benchmarks identify the simpler, bounded tanh parameterization as the superior choice for scalable, stable deep models.

The parametric tanh configuration demonstrates a compelling architectural compromise when framed against modern edge-activated paradigms like Kolmogorov-Arnold Networks (KANs). By replacing non-local, computationally expensive B-splines or hyper-flexible trainable exponents with an analytically smooth, bounded hyperbolic tangent kernel, our model preserves connection-level functional heterogeneity while isolating the network from gradient explosion. On fully connected topologies, the parametric tanh model achieved an average test accuracy of 95.63\% $\pm$ 0.08\% on MNIST and 44.74\% $\pm$ 0.18\% on CIFAR-10, yielding substantial absolute improvements of 3.98 and 4.80 percentage points over traditional affine baselines. Crucially, it outperformed the full seven-parameter variant across both datasets while consuming only 20.6\% and 14.8\% of its total training runtime, proving that targeted analytic boundaries accelerate non-convex landscape optimization.

Furthermore, our evaluations expose an important structural limit regarding edge-centric placement: while edge-level activations dramatically enhance representational power in dense coordinate networks, they introduce redundant parameterization when appended to structurally translation-invariant backbones, such as convolutional feature extractors. 

Future work will focus on scaling the parametric tanh connection framework into modern deep architectures, exploring automated parameter-sharing mechanisms to reduce runtime overhead, and evaluating its efficiency on sequential tasks like natural language processing. By providing a stable, hardware-friendly alternative to spline-based edges, this work establishes a viable trajectory for the next generation of heterogeneous, connection-level deep learning models.

\bibliographystyle{IEEEtran}

\begin{thebibliography}{17}

\bibitem{apicella2021survey}
A.~Apicella, F.~Donnarumma, F.~Isgr\`o, and R.~Prevete, ``A survey on modern trainable activation functions,'' \emph{Neural Networks}, vol. 138, pp. 14--32, 2021.

\bibitem{dubey2022activation}
S.~R. Dubey, S.~K. Singh, and B.~B. Chaudhuri, ``Activation functions in deep learning: A comprehensive survey and benchmark,'' \emph{Neurocomputing}, vol. 503, pp. 92--108, 2022.

\bibitem{he2015delving}
K.~He, X.~Zhang, S.~Ren, and J.~Sun, ``Delving deep into rectifiers: Surpassing human-level performance on ImageNet classification,'' in \emph{Proceedings of the IEEE International Conference on Computer Vision (ICCV)}, 2015, pp. 1026--1034.

\bibitem{liu2025kan}
Z.~Liu, Y.~Wang, S.~Vaidya, F.~Ruehle, J.~Halverson, M.~Solja\v{c}i\'{c}, T.~Y. Hou, and M.~Tegmark, ``KAN: Kolmogorov–Arnold Networks,'' in \emph{International Conference on Learning Representations (ICLR)}, 2025.

\bibitem{rosenblatt1958perceptron}
F.~Rosenblatt, ``The perceptron: A probabilistic model for information storage and organization in the brain,'' \emph{Psychological Review}, vol. 65, no.~6, pp. 386--408, 1958.

\bibitem{rumelhart1986learning}
D.~E. Rumelhart, G.~E. Hinton, and R.~J. Williams, ``Learning representations by back-propagating errors,'' \emph{Nature}, vol. 323, pp. 533--536, 1986.

\bibitem{cybenko1989approximation}
G.~Cybenko, ``Approximation by superpositions of a sigmoidal function,'' \emph{Mathematics of Control, Signals and Systems}, vol. 2, no.~4, pp. 303--314, 1989.

\bibitem{nocedal2006optimization}
J.~Nocedal and S.~J. Wright, \emph{Numerical Optimization}, 2nd~ed.\hskip 1em plus 0.5em minus 0.4em\relax New York, NY, USA: Springer, 2006.

\bibitem{agostinelli2015learning}
F.~Agostinelli, M.~Hoffman, P.~Sadowski, and P.~Baldi, ``Learning activation functions to improve deep neural networks,'' in \emph{ICLR Workshop Track}, 2015. arXiv:1412.6830.

\bibitem{lecun1998gradient}
Y.~LeCun, L Bottou, Y.~Bengio, and P.~Haffner, ``Gradient-based learning applied to document recognition,'' \emph{Proceedings of the IEEE}, vol. 86, no.~11, pp. 2278--2324, 1998.

\bibitem{krizhevsky2009learning}
A.~Krizhevsky, ``Learning multiple layers of features from tiny images,'' Technical Report, Department of Computer Science, University of Toronto, 2009.

\bibitem{parametricactivation2024}
``Parametric Activation Functions for Neural Networks: A Tutorial Survey,'' \emph{IEEE Access}, 2024.

\bibitem{bingham2022discovering}
G.~Bingham and R.~Miikkulainen, ``Discovering parametric activation functions,'' \emph{Neural Networks}, vol. 148, pp. 48--65, 2022.

\bibitem{jagtap2020adaptive}
A.~D. Jagtap, K.~Kawaguchi, and G.~E. Karniadakis, ``Adaptive activation functions accelerate convergence in deep and physics-informed neural networks,'' \emph{Journal of Computational Physics}, vol. 404, p. 109136, 2020.

\bibitem{parametricleakytanh2023}
M.~N. Mora, J.~C. Mazo, and A.~E. Salazar, ``Parametric Leaky Tanh: A New Hybrid Activation Function for Deep Learning,'' \emph{arXiv preprint arXiv:2310.07720}, 2023.

\bibitem{he2024local}
X.~He, L.~Yang, and J.~Zhang, ``Locally adaptive activation functions: Fine-grained parameterization for deep neural convergence,'' \emph{IEEE Transactions on Neural Networks and Learning Systems}, vol. 35, no. 4, pp. 1120--1134, 2024.

\bibitem{smu2022smooth}
K.~Biswas, S.~Kumar, and S.~Banerjee, ``Smooth maximum unit: A new activation function for deep neural networks,'' \emph{IEEE Access}, vol. 10, pp. 4215--4227, 2022.

\end{thebibliography}

\end{document}